\newtheorem{theorem}{THEOREM}[section]
\newtheorem{lemma}[theorem]{LEMMA}
\begin{document}
%
\title{TCM-ICP: Transformation Compatibility Measure for Registering Multiple LIDAR Scans}
%
%
%

\author{Aby~Thomas,~\IEEEmembership{}
        Adarsh~Sunilkumar,~\IEEEmembership{}
        Shankar~Shylesh,~\IEEEmembership{}
        Aby~Abahai~T.,~\IEEEmembership{}
        Subhasree~Methirumangalath,~\IEEEmembership{}
        Dong~Chen~\IEEEmembership{}
        and~Jiju~Peethambaran
\thanks{Aby~Thomas, Department
of Computer Science and Engineering, National Institute of Technology, Calicut, Kerala, India}
\thanks{Adarsh~Sunilkumar, Department
of Computer Science and Engineering, National Institute of Technology, Calicut, Kerala, India }
\thanks{Shankar Shylesh,  Department
of Computer Science and Engineering, National Institute of Technology, Calicut, Kerala, India}
\thanks{Aby Abahai T, Department
of Computer Science and Engineering, National Institute of Technology, Calicut, Kerala, India}
\thanks{Subhasree~Methirumangalath, Associate Professor, Department
of Computer Science and Engineering, National Institute of Technology, Calicut, Kerala, India}
\thanks{Dong~Chen, Associate Professor,
College of Civil Engineering,Nanjing Forestry University, Nanjing, China}
\thanks{Jiju~Peethambaran, Assistant Professor,
Department of Math \& Computing Science, Saint Mary’s University, Halifax, Canada}
}

\maketitle

\begin{abstract}
Rigid registration of multi-view and multi-platform LiDAR scans is a fundamental problem in 3D mapping, robotic navigation, and large-scale urban modeling applications. Data acquisition with LiDAR sensors involves scanning multiple areas from different points of view, thus generating partially overlapping point clouds of the real world scenes. Traditionally, ICP (Iterative Closest Point) algorithm is used to register the acquired point clouds together to form a unique point cloud that captures the scanned real world scene. Conventional ICP faces local minima issues and often needs a coarse initial alignment to converge to the optimum. In this work, we present an algorithm for registering multiple, overlapping LiDAR scans. We introduce a geometric metric called Transformation Compatibility Measure (TCM) which aids in choosing the most similar point clouds for registration in each iteration of the algorithm. The LiDAR scan most similar to the reference LiDAR scan is then transformed using simplex technique. An optimization of the transformation using gradient descent and simulated annealing techniques are then applied to improve the resulting registration. We evaluate the proposed algorithm on four different real world scenes and experimental results shows that the registration performance of the proposed method is comparable or superior to the traditionally used registration methods. Further, the algorithm achieves superior registration results even when dealing with outliers.
\end{abstract}

\begin{IEEEkeywords}
Point~Clouds, Registration~Methods, PCL, 3D~Registration, ICP, Feature~Based

\end{IEEEkeywords}

%
\IEEEpeerreviewmaketitle

\section{Introduction}
%
%
%
%

Over the last decade, Light Detection and Ranging (LiDAR) systems have emerged as a predominant tool for capturing outdoor scenes for various applications such as 3D mapping and navigation, urban modeling and simulation, and risk assessment of urban utilities such as powerlines. LiDAR sensors mounted on various platforms including aircraft or unmanned aerial vehicles (UAV), vehicle, satellite and tripod \cite{b12} collect the 3D data of outdoor scenes in the form of point clouds. Considering the operational efficiency and quality of scans, multiple scans of the same object or scene is acquired from various positions which helps in removing blind spots in the scan. Multiple scans are then stitched together to generate the final 3D scan of the scene. In this paper, we focus on a hybrid solution for registering multiple LiDAR scans using statistical and geometrical constraints.
 
In general, point cloud registration is the process of assigning correspondences between two sets of points and recovering the transformation that maps one point set to the other \cite{b8}. Point cloud registration frequently applies a coarse-to-fine registration strategy\cite{b2}. In coarse registration, the initial registration parameters for the rigid body transformation of two point clouds are mainly estimated using the feature-based method\cite{b3}. Registration based on point, line and surface features are included in the feature-based coarse registration method. In fine registration, the main aim is to achieve maximum overlap of two point clouds, primarily using the iterative approximation method, normal distribution transform method, random sample consensus method or methods with auxiliary data\cite{b3}.

Registration of noisy and overlapping LiDAR data of three-dimensional surfaces or scenes is a challenging problem\cite{b2}. The quality of data provided is a key factor that affects correctness of point cloud registration. The data obtained through LIDAR is sparse and non-homogeneous. All LIDAR registration mechanisms work on the principle of aligning the key points of the object scanned so as to combine multiple point clouds from various views in order to recreate a 3D model of the object. The presence of irrelevant and inconsistent data can thus cause complications in the registration process. Such irrelevant data points are commonly referred to as outliers. Thus removing outliers from the obtained input becomes vital in creating an efficient and accurate model. Only data that has been well preprocessed is expected to yield promising results.\par

Many point cloud registration algorithms have been developed over the years. Iterative approximation method is widely used in the field of point cloud registration among the fine registration methods. Iterative approximation methods mainly refers to ICP (Iterative Closest Point) algorithm proposed by Mckay and Besel\cite{b1}. However the iterative nature of the ICP algorithm makes it less efficient in dealing with high density and large-scale point cloud scenes and also slow at finding corresponding points between two point clouds \cite{b3}. In order to overcome the problems of existing LIDAR registration methods such as need of auxiliary data or target, requirement of sufficient overlapping areas, and difficulty in feature extraction and matching an algorithm is developed called TCM ICP (Transformation Compatibility Measure for Registering Multiple LIDAR Scans). We make the following key contributions.\\
 \begin{itemize}
   \item A geometrical and statistical metric called Transformation Compatibility Measure (TCM) which is employed to select the point cloud that is most compatible with the reference/destination cloud for registration.
   \item A simplex algorithm based registration method for LIDAR data that takes multiple LIDAR scans (point clouds) of a given object or scene having variable densities and coordinate frames as input and generate a 3D model of the object or scene by transforming the various point clouds into a global coordinate system.
   \item An optimization method by combining gradient descent and simulated annealing techniques with the transformation matrix calculation to improve the results. 
 \end{itemize}




\section{Related Work}
LiDAR point cloud data sets tend to have non-uniform point distribution and differing coordinate axes that lead to challenges in registering the points to 3D models. Various coarse registration and fine registration techniques have been developed over the years for registering LiDAR point clouds. In general,point cloud registration can be classified into two: feature based registration and iterative approximation methods such as ICP algorithm. The removal outliers is one among the important steps of point cloud registration.\par


\subsection{Feature-based Registration}
Feature-based transformation models such as point-based, line-based and plane-based 3D transformation models were studied and improved by people across the globe. Over the years, these techniques have been used extensively for point cloud registration. The accuracy of registered points depends on the techniques used for feature extraction\cite{b3}. Feature based registration is a point cloud registration mechanism in which key features of the object scanned such as corners and edges are extracted from the point clouds and mapping of corresponding pairs of features among point clouds is carried out. Jaw and Chuang (2008)\cite{b4} described a feature based transformation technique that uses point-based, line-based and plane-based techniques. This 3D transformation was mathematically described by a 7-parameter spatial similarity transformation\cite{b4}. The technique showed promising results with high degrees of flexibility and accuracy when working with datasets of nominal sizes. The transformations are modeled mathematically and parameters are approximated using techniques such as least square distance approximation. The method takes a pair of point clouds, apply necessary transformations and combine the two point clouds according to matching results. The main drawback of the technique comes from the fact that taking random point clouds and joining their transformed results lead to propagation of errors to neighbouring point clouds. The method performs better than those which use a single feature for registration. Even though it provides a certain amount of robustness, the results proved to be inferior to those of other iterative methods of registration.\par

Forstner and Khoshelham\cite{b9} proposed a method for point cloud registration based on plane to plane correspondences. This method works by extracting planar regions from point clouds, the extraction process uses maximum likelihood estimation to extract planes with some degree of uncertainty. Three direct solutions namely direct algebraic solution, direct whitened algebraic solution and single-iteration ML-solution were applied on the extracted features in order to guarantee convergence of the feature matching process. These methods are accurate and time efficient to some extent but prove to be statistically suboptimal compared to some iterative approximation methods.

\subsection{ICP-based Registration}

A major challenge in registration of LIDAR point clouds using iterative approximation methods is finding a fast and efficient method for obtaining matching point pairs and devising a feasible
algorithm for translation and rotation of one point cloud onto another reference point cloud. This process is vital in transforming all the point clouds to a single and global coordinate system\cite{b13}.
Only after performing necessary preprocessing, transformation and error checks the point clouds can be registered to produce a 3D model. \par

Besl (1992)\cite{b1} introduced an algorithm called iterative
closest point algorithm. Given an initial guess of the rigid body transformation, the algorithm proved to be successful in a variety of applications related to aligning 3D models. The ICP algorithm fixes one point cloud as the reference point cloud and then runs iteratively on the other point clouds (source), through each iteration the source point cloud is transformed so as to match the reference point cloud. A root mean square distance between the points is used to align each point in the source point cloud to its match found in the reference point cloud. The process is stopped when the error metric (usually calculated as sum of distances between matched points) is within some threshold value or some predefined number of iteration is reached.
Even though the method put forward by Besl gave promising results, it came with some drawbacks. It uses a point-to-point distance method for finding corresponding point pairs in the point clouds and
removing them. This proved to be less efficient when dealing with datasets of large size. The preprocessing steps employed by this method also proved to be incapable of removing non-matching points and outliers from the point clouds. The need for an initial guess can also be considered as a drawback of this algorithm.\par

Xin and Pu (2010)\cite{b1} examined the drawbacks of the existing ICP and came up with an improved ICP algorithm. This method used the center of gravity of the matching pairs of as the reference point. This reference point and a combination of orientation constraints are together used to remove false point pairs. Point pair distance constraints used in this method also improved the performance. This modified algorithm uses the same preprocessing step as the conventional ICP algorithm. It introduces improvements in the second step of the algorithm, instead of a point to point distance method this new algorithm uses point pair distance constraints and centre of gravities of the point clouds as the reference pairs to reject false pairs.\par
The more the points pairs, better is the output quality of the improved algorithm put forward in \cite{b1}. Fewer number of point pairs leads to failure in registration. Higher number of erroneous or non-matching point pairs leads to higher error rates and improper transformation of point clouds. Matching pairs have high sensitivity to noise and have a high percentage of false pairs at the early stages of aligning. The accuracy of registration, speed and convergence rates are the points to be improved in this version of ICP.\par
Go-ICP \cite{b14} is a global optimization method on the well-established Branch-and-Bound (BnB) theory. However, selecting an appropriate domain parametrization to construct a tree structure in BnB and, more importantly, to extract efficient error boundaries based on parametrization .In order to address the local minima problem,global registration methods have been investigated in Go-ICP. Here local ICP is integrated into the BnB scheme, which speeds up the new method while guaranteeing global optimality.\par
It is also possible to accelerate the closest point search using fast global registration algorithm \cite{b16} that does not involve
iterative sampling, model fitting, or local refinement. The algorithm does not require initialization and can align noisy partially overlapping surfaces. It optimizes a robust objective defined densely over the surfaces. Due to this dense coverage, the algorithm
directly produces an alignment that is as precise as that computed by well-initialized local refinement algorithms. The optimization does not require closest-point queries in the inner loop.  Another formulation of the ICP algorithm is registration using sparsity inducing norm\cite{b17} that optimizes by avoiding difficulties such as sensitivity to outliers and missing data.Also PointNet which represents point clouds itself can be thought of as a learnable imaging function\cite{b18}. Here classical vision algorithms for image alignment can be applied
for point cloud registration.\par
\subsection{Effect of outliers for registration}
The presence of outliers or irrelevant data in the dataset can lead to improper matching of data points among the point clouds during the process of calculating the transformation matrices. This inturn leads to lower accuracy and coarse edges in the 3D model that is generated\cite{b14}. Thus outlier detection and removal becomes vital in producing good results. The most popular outlier detection techniques are distance based outlier detection technique, density based outlier detection technique and cluster based outlier detection technique\cite{b5}.
Distance based outlier detection techniques work by calculating distances between data points, if a point has a distance close to its nearest neighbour then it is considered as a normal point, otherwise it is marked as an outlier.
In density based outlier detection algorithm, every object in the data set is referred as local outlier factor (LOF). The LOF is the degree which is assigned to the object of data set. It is also defined as the ratio between the local density of an object and the average of those of its k nearest neighbors\cite{b5}.
In clustering based outlier detection, the given data points are clustered into groups, similar or neighbouring data points are expected to end up in the same cluster\cite{b5}. Many clustering algorithms are used for clustering with K-means algorithm, which being one of the most common choices. Some statistical methods or techniques such as weighted centre based methods are applied on each cluster to detect and remove outliers\cite{b6}.\par

\section{Methodology}
The input to our multi scan registration system is a set if point clouds, $\Psi=\{P_{1}, P_{2}, .., P_{n}\}$. The proposed model for LiDAR registration (referred to as TCM ICP) has three steps. The first step is a preprocessing to remove outliers from the input scans. The second step consists of determining the rotational and translational matrices for each input point cloud. To this end, a reference cloud is selected based on a correspondence measure from the input point clouds. In each iteration, point cloud with the least TCM(Transformation Compatibility Measure) value is selected for the registration. Simplex and gradient optimization techniques are employed to find the optimal rotational and translational matrices for registering the selected point cloud to the reference point cloud. In the final step, the actual transformation of the point cloud to the reference frame is performed.

\begin{figure}[]
\centering
  \includegraphics[width=8cm]{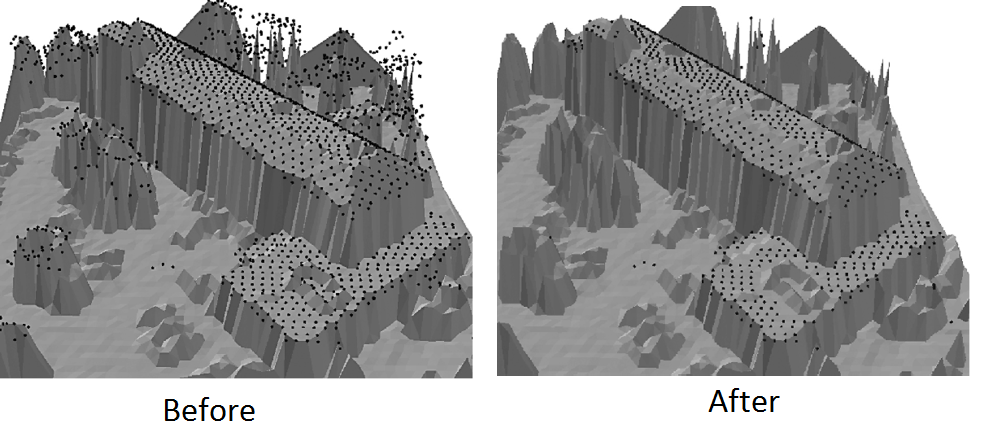}
  \caption{A sample of point cloud before and after the outlier removal }
  \label{label1}
\end{figure}

\subsection{Preprocessing}
In the preprocessing phase, we use a K-means clustering based technique to identify and remove outliers from each point cloud. First, a set of \textit{k} points are selected from each point cloud with the help of K-D tree. To this end, each point cloud is embedded into a K-D tree and a centroid point is chosen at random. All the points lying within a sphere of radius $r$ are then removed. This process is repeated until $k$ number of centroids are selected. Thus selected $k$ points are then used as initial centroids in K-means clustering. Note that, K-D tree based centroid selection ensures that every pair of centroids are spatially well separated.  Further, the use of K-D tree avoids the randomness in centroid selection and reduces the iteration count of K-means algorithm. Outlier removal is performed using the $k$ clusters generated by K-means, i.e., points that lie beyond a fixed threshold limit from each cluster centroid are removed from the point cloud. An example of outlier removal is shown in Figure \ref{label1}.

\subsection{Selection of the Reference Point Cloud}
The crucial step in the registration process is finding a rigid transformation that aligns the input point cloud to the reference point cloud. A reference point cloud is one of the input point cloud that exhibits a good correspondence with all the remaining point clouds. Linear inequalities are formulated for all other source point clouds using the underlying concepts of simplex method\cite{b11}. These inequalities are then solved to obtain the transformation matrices for each point cloud which is discussed in section \ref{mea}.

Reference point cloud has a significant influence on the quality of registration. Good correspondence between the reference cloud and the remaining point clouds is essential and would greatly improves the computational performance in finding optimal transformations.  To define the reference point cloud, the correspondence from a point cloud to every other point cloud is calculated. The correspondence of a point cloud $P_{i}$ to another point cloud $P_{j}$ is a closeness measure between $P_{i}$ and $P_{j}$. A threshold is imposed on correspondence values, i.e., if the correspondence value between $P_{i}$ and $P_{j}$ is less than a threshold, then a connection is said to exist between $P_{i}$ and $P_{j}$. The point cloud with the highest number of direct connections is chosen as the reference point cloud which will define the reference coordinate system for the final registration. If several point clouds meet this condition, the point cloud located in a central position regarding the list of files is arbitrarily defined as the reference data, assuming that the data have been acquired successively in the spatial distribution.
\begin{figure}[]
  \centering
  \includegraphics[scale = 0.3]{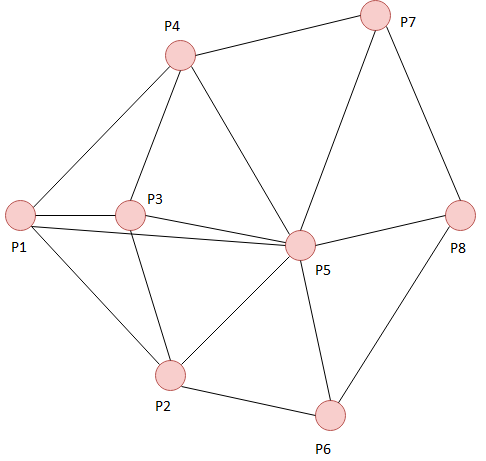}
  \caption{An example of correspondence graph representing connections between point clouds. Note that the algorithm chooses node 7, which has the highest as the reference point cloud for the registration. }
  \label{label1}
\end{figure}

\subsection{Transformation Compatibility Measure ($\tau$)}\label{mea}
 The order in which the point clouds are merged using simplex method can drastically affect the accuracy of the final output. We propose a metric called the Transformation Compatibility Measure(denoted by $\tau$) to impose an ordering in the point cloud selection and merging. TCM measure (Equation \ref{eq1}) captures the compatibility among different point clouds which is quantified through the inter- and intra-cluster distances, and the normalized distance between the point clouds. TCM measure ensures that the most similar point clouds are selected for transformation in each iteration of the registration process. This selection process is pivotal in determining the correct order and values of the point cloud transformations.
\begin{equation}\label{eq1}
\tau(P_{i},P_{j})= f_{ij}g_{ij}h_{ij}
\end{equation}

The inter-cluster distance between two point clouds $P_{i}$ and $P_{j}$ is computed by adding the minimum distances from each point of one cloud to cluster centroid of the other point cloud as given in Equations \ref{eq3}-\ref{eq5}.
\begin{equation}\label{eq3}
f_{ij}=X_{i}+Y_{j}
\end{equation}
\begin{equation}\label{eq4}
X_{i}=min(\parallel p_{k}-\mathcal{C}(P_{j})\parallel^2 \mid \forall p_{k}\in P_{i}, i \neq j)
\end{equation}
Here $\mathcal{C}(.)$ denotes the centroid of the point set.
\begin{equation}\label{eq5}
Y_{j}=min(\parallel p_{k}-\mathcal{C}(P_{i})\parallel^2 \mid \forall p_{k}\in P_{j},  i \neq j)
\end{equation}

We can observe that Equations \ref{eq3}-\ref{eq5} is a customized version of the Hausdorff metric (Equation \ref{he}).
\begin{equation}\label{he}
H(A,B)=max{(h(A,B),h(B,A))}
\end{equation}
which defines the Hausdorff distance between two geometric shapes $A$ and $B$. The two distances $h(A, B$) and $h(B, A)$ are sometimes termed as forward and backward Hausdorff distances of A to B.
\begin{equation}
h(A,B)=\underset{a\in A}{max}(\underset{b\in B}{min}(\parallel a-b\parallel^2))
\end{equation}
where $a$ and $b$ are points of sets $A$ and $B$ respectively. Lower the distance value, better is the matching between $A$ and $B$. The most important use of Hausdorff metric is that it can be used to check if a certain feature is present in a point cloud or not. This method gives interesting results, even in the presence of noise or occlusion (when the target is partially hidden). In our case, the presence of noise is very minimum due to initial preprocessing.

The intra-cluster distance is obtained by calculating the minimum of all point to point distance in each cloud and subtracting one cloud value from the other (Equations \ref{eq6}-\ref{eq8}).
\begin{equation}\label{eq6}
g_{ij}=M_{i}-N_{j}
\end{equation}
\begin{equation}\label{eq7}
M_{i}=min(\parallel p_{k}-p_{l}\parallel^2 \mid \forall p_{k}, p_{l}\in P_{i}, k \neq l)
\end{equation}
\begin{equation}\label{eq8}
N_{j}=min(\parallel p_{k}-p_{l}\parallel^2 \mid \forall p_{k}, p_{l}\in P_{j}, k \neq l)
\end{equation}

Finally, to avoid the bias which may induce due to the varying number of points in point clouds, we perform a normalization of the measure using Equation \ref{eqn}.
\begin{equation}\label{eqn}
h_{ij}=\frac{1}{|P_i||P_j|}
\end{equation}

We use the method of contradiction to to establish the correctness of the transformation compatibility measure (refer to Lemma \ref{lem1}).
\begin{lemma}\label{lem1}
Transformation compatibility metric (Equation \ref{eq1}) between the point clouds correctly aids in choosing the most similar point clouds in each iteration.
\end{lemma}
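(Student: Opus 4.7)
The plan is to argue by contradiction, following the hint in the excerpt. Fix a reference point cloud $P_i$ and suppose, toward a contradiction, that among the candidate source clouds the metric selects some $P_k$ (i.e.\ $\tau(P_i,P_k)$ is minimal) while a different cloud $P_j$ is in fact geometrically the ``most similar'' to $P_i$. To make this precise I would first pin down what similarity means in this context: two point clouds are most similar when (a) one lies spatially close to the other (captured through the Hausdorff-type expression $f_{ij}$), (b) they have comparable sampling densities so that their intra-cloud nearest-neighbor distances agree (captured by $g_{ij}$), and (c) the comparison is not artificially inflated by large cardinality (captured by $h_{ij}$). The goal is then to derive inequalities on each of the three factors and combine them to contradict $\tau(P_i,P_k)\le\tau(P_i,P_j)$.

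The first step is to handle $f_{ij}$. Since $f_{ij}=X_i+Y_j$ is built from the minima of squared distances from every point of one cloud to the centroid of the other, it is, as the paragraph preceding the lemma explains, a customized Hausdorff-type quantity. I would invoke exactly the interpretation the authors cite for the standard Hausdorff distance, namely that smaller values certify spatial closeness of the two sets. Under the assumption that $P_j$ is more similar to $P_i$ than $P_k$ is, I would conclude $f_{ij}<f_{ik}$. The second step treats $g_{ij}=M_i-N_j$. Because $M_i$ depends only on $P_i$ and is therefore the same across all candidates, the ordering of $|g_{ij}|$ is governed by how close $N_j$ is to $M_i$; equivalently, by how well the nearest-neighbor density of $P_j$ matches that of the reference. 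Similarity of sampling density at $P_j$ to $P_i$ then yields $|g_{ij}|\le|g_{ik}|$.

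The third step combines these with the normalization. Since $h_{ij}=1/(|P_i||P_j|)$ only compensates for cardinality, the factors $f_{ij}h_{ij}$ and $g_{ij}h_{ij}$ can be read as density-normalized versions of the inter- and intra-cluster contributions. Multiplying the two bounds above gives
\begin{equation}
|\tau(P_i,P_j)| \;=\; f_{ij}\,|g_{ij}|\,h_{ij} \;<\; f_{ik}\,|g_{ik}|\,h_{ik} \;=\; |\tau(P_i,P_k)|,
\end{equation}
directly contradicting the assumed minimality of $\tau(P_i,P_k)$. The claim then follows by taking the $P_j$ that realizes the minimum of $\tau(P_i,\cdot)$.

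The main obstacle I anticipate is that $g_{ij}=M_i-N_j$ is a signed quantity, so the product $f_{ij}g_{ij}h_{ij}$ can be negative and the phrase ``the minimum of $\tau$'' is only meaningful once a convention (absolute value, or an implicit assumption $M_i\ge N_j$) is fixed. I would resolve this by interpreting the selection rule as minimizing $|\tau|$ (equivalently, using $|g_{ij}|$), which is the reading that makes the lemma true and is consistent with the stated goal of matching density. A secondary subtlety is that $f_{ij}$ is not symmetric in the geometric sense of a true Hausdorff distance (it uses centroids rather than pointwise maxima), so the inferred inequality $f_{ij}<f_{ik}$ really rests on the additional assumption that the source clouds have well-defined, representative centroids, which is reasonable after the K-means based preprocessing performed earlier in the pipeline.
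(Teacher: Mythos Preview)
Your proposal follows essentially the same route as the paper: argue by contradiction, examine the Hausdorff-type factor and the intra-cluster factor separately, and conclude that the ``actually most similar'' pair must have the smaller TCM value, contradicting the assumed selection. Apart from a harmless swap of the labels $j$ and $k$, the skeleton matches the paper's proof.

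Where you differ is in care, not in strategy. The paper's argument is terse: it asserts that since $P_i,P_k$ are most similar, the intra-cluster difference is smaller for that pair, hence the Hausdorff component would have to be larger, which it declares ``obviously not correct,'' invokes the outlier-removal preprocessing, and concludes. You do more work: you make explicit what ``most similar'' is supposed to mean in terms of the three factors, you note that $M_i$ is common to all candidates so the ordering of $g$ reduces to how close $N_j$ is to $M_i$, and you flag two genuine issues the paper does not address---the sign of $g_{ij}$ (forcing you to read the selection rule as minimizing $|\tau|$) and the fact that $f_{ij}$ is only a centroid-based surrogate for Hausdorff distance, which you justify via the preprocessing step. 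These caveats are real and your treatment of them is an improvement over the paper's; the paper simply does not engage with them.
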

\begin{proof}
The proof is by contradiction. We assume that TCM does not choose the most similar point clouds for transformation. During $m^{th}$ iteration (in general any $m$) TCM chooses point clouds $P_i$ and $P_j$ as most similar point clouds instead of $P_i$ and $P_k$  which are actually the most similar in the $m^{th}$ iteration.\\
\[\tau(P_i,P_j)<= \tau(P_i,P_k)\]

TCM has two main components-first is the customized version of Hausdorff distance and the other component is the difference between the intra-cluster distances between the two point clouds. As per our assumption, the difference between the intra-cluster should be lower for $P_k$ and $P_i$ as compared to $P_i$ and $P_j$. That is first part of TCM should be higher for $P_i$ and $P_k$ pair. This implies that the customized Hausdorff distances between the most similar point cloud is large, which is obviously not correct. Further, the point clouds are obtained after KD tree based clustering which implies outliers have been removed, indicating that\\
 $\tau(P_i ,P_j) > \tau(P_i, P_k)$.\\

Point clouds with least value of TCM are chosen during each iteration. Since $\tau(P_i, P_k)$ is lower than $\tau(P_i ,P_j)$, the algorithm would have originally chosen $P_i$ and $P_k$ for transformation. Thus our assumption that the wrong point cloud pair was chosen by TCM is wrong and hence, the proof.
\end{proof}

\subsection{Registration of Point Clouds}
\paragraph{Simplex Algorithm for Registration}Consider two point clouds $P_{i}$ and $P_{j}$, our objective is to find the rigid transformation that aligns $P_{j}$ to $P_{i}$. The transformation of one point cloud to another involves translation, $T$ as well as rotation, $R$. Given $P_{i}$ and $P_{j}$, computing the unknown $T$ and $R$ effectively is the key aspect of point cloud registration. A point cloud $P $transformed into $P'$ can be expressed by the Equation \ref{rt}.
\begin{equation}\label{rt}
P'=R*P+T
\end{equation}
Solving this equation yields the rotation and translation matrices.

In our problem, the number of constraints that can be formulated is less than the number of variables present and as a consequence, most methods for solving linear equations fails to work here. The problem is thus modeled as an optimization problem with three possibilities.
\[P'>R*P+T\]
\[P'<R*P+T\]
\[P'=R*P+T\]
The values of $R$ and $T$ with least error obtained by optimizing these inequalities using simplex method \cite{b11} are used as the rotational and translational matrices for a point cloud. Though algebraic in nature, the underlying concepts of simplex algorithm are geometric. Simplex algorithm tests adjacent vertices of the feasible set (which is a polytope) in sequence so that at each new vertex the objective function improves or is unchanged. The simplex method is very efficient in practice, generally taking $2m$ to $3m$ iterations at most (where $m$ is the number of equality constraints), and converging in expected polynomial time for certain distributions of random inputs. However, its worst-case complexity is exponential.


The point cloud with the best transformation, that is the point cloud that has minimum deviation from the reference point cloud is selected and merged with the reference point cloud after the transformation. This process of calculating translational and rotational matrices and merging of point clouds is continued until a single point cloud with a reference coordinate system is obtained. The merging of point clouds is done with the expectation that future iterations of the process will lead to improvement in the quality of the dataset.

\paragraph{Optimization} We combine gradient descent and simulated annealing techniques with the transformation matrix calculation to improve the results. The translation and rotation matrices calculated using simplex algorithm are fed to the gradient descent method to calculate the local optimum values for translation and rotation. Gradient descent is an optimization algorithm used to minimize the function by iteratively moving in the direction of steepest descent as defined by the negatives of the gradient. In our problem, gradient descent is used to update the parameters of transformation. Parameters refer to the coefficients in the rotational and translational matrices. All the combinations of marginal increase and marginal decrease of parameter values are performed and the combination that gives the least values of TCM (Transformation Compatibility Measure) measure after point cloud transformation are used to transform the point cloud. The pseudo code for the entire registration method is given in Algorithm \ref{hd}.


\begin{algorithm}[h]\label{alg2}
\KwIn{Pre-processed set of points clouds $\Psi=\{P_{1}, P_{2}, .. , P_{n}\}$}
\KwOut{Registered point cloud, $S$}
Construct the correspondence graph $G=(V, E)$, where $V$ is a set of vertices one for each $P_{i}\in \Psi$ and $E$ consists of all the edges $e=(v_{i}, v_{j})$ such that $correspondence(P_{i}, P_{j}) < threshold$ \;
Let $S=\{ P_{parent}\in \Psi \mid v_{parent}$ is the maximum degree vertex in $V\}$\;
Update $\Psi$, i.e., $\Psi=\Psi\setminus P_{parent}$\;
Initialize $i=2$\;
\While{$i \leq n$ } {
\For {each $P_{j} \in \Psi$} {
Compute $tcm(P_{j}, S)$\;
}
Let $P_{q}$ be the point cloud with minimum $tcm$\;
[T,R]=Simplex($P_{q}$, $S$)\;
[T$'$, R$'$]=Gradient\_descent(T, R)\;
Transform $P_{q}$, i.e., $P_{q}'$=transform($P_{q}, T', R')$\;
Update the registered point cloud $S=S\cup P_{q}'$\;
Update the point set, $\Psi=\Psi\setminus P_{q}$\;
}
\Return{$S$}\;
\caption{TCM-ICP($P$)}
\label{hd}
\end{algorithm}

\begin{figure}[H]
\centering
  \includegraphics[scale = .25]{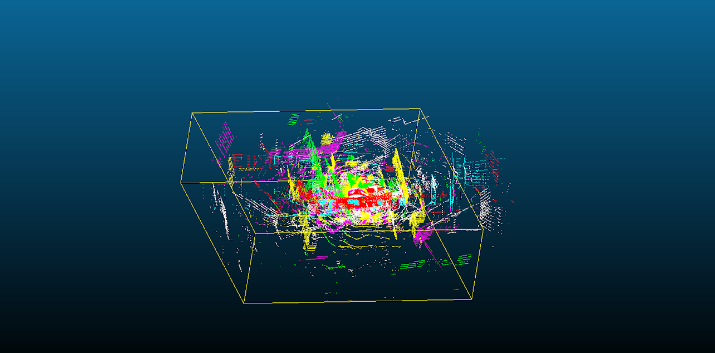}
  \caption{simple point cloud plots in various colours of Data set 1 before registration.}
  \label{bef}
\end{figure}
\begin{figure*}[ht]
	\centering
	\includegraphics[width=16cm]{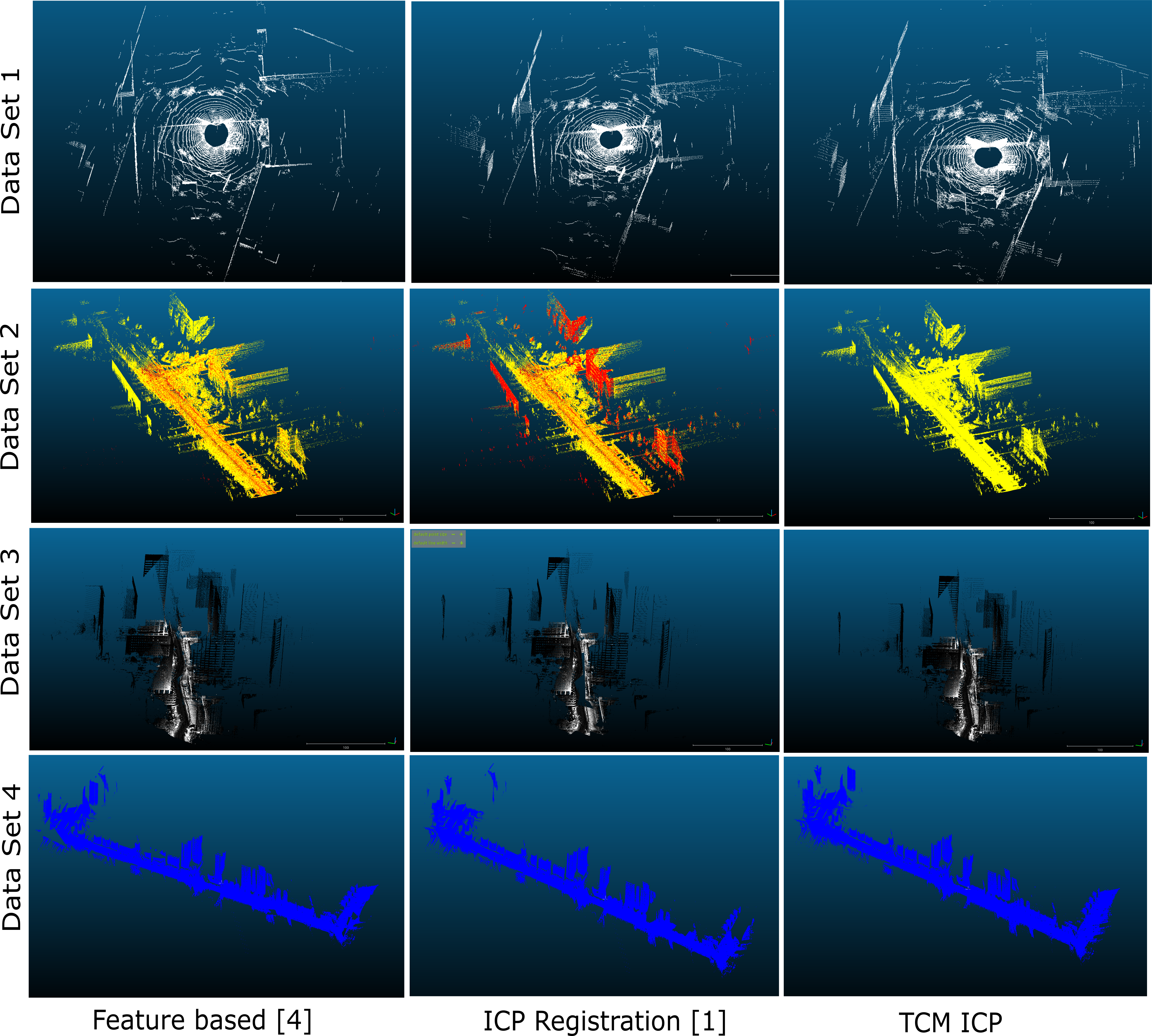}
	\caption{Gallery of registration results. The first two columns of the second row shows the overlapped images of LiDAR scans obtained after registration of Data set 2 using TCM ICP, and ICP based and feature based registration methods, respectively.  Note that the yellow scans in both the results represent the output of TCM ICP and the red represents the other result.}
	\label{result}
\end{figure*}

\section{Experiments and Results}
All the experiments were performed on a system with Intel Xeon E5-2600 processor with 2.4 GHz and 32 GB of DDR4 RAM. Software used for point cloud processing were Point Cloud Library(PCL) \cite{b19}, Computational Geometry Algorithms Library (CGAL) \cite{b20} and Cloud Compare \cite{b21}.

The point cloud registration algorithms were tested with four LIDAR data sets. 
\begin{enumerate}
	\item Data Set 1: This data set was recorded with the intention to test registration algorithm robustness in the context of navigation with low accuracy of the sensor and motion during acquisition which is localized in Clermont-Ferrand (France). It provides an urban scene consisting of buildings and trees.
	\item  Data Set 2 \& 4: This data set is a mobile LiDAR scene of road strip with building on either side.
	\item Data Set 3:This data set provides an excellent mix of the conditions that a surveying and geo-spatial firm would be challenged with logistically. This is localized in south shore of the Lynx Lake in Prescott, United States, which is ideal because of the large amount of assets to map and an open tree canopy.
\end{enumerate}
  \textnormal{All the data sets stated before were registered using conventional ICP based method \cite{b1}, feature based method \cite{b4} and the proposed TCM ICP. All the four data sets consists of mobile LiDAR of various cities. The subsequent evaluation comprise of qualitative and quantitative analysis of the outputs. These data sets contain sparse as well as dense areas. Further, the input scans contain isolated regions and overlaping areas. So the data sets are heterogeneous in nature and hence, represent a good choice for testing the proposed registration method.}

\subsection{Qualitative Analysis}
Figure \ref{bef} is a plot consist of all the point clouds of Data set 1 before registration. Individual point clouds are shown in different colours. Figure \ref{bef} clearly shows that pure 3D plot of the scanned points is not meaningful with many misalignments between the individual scans. A simple visual inspection of Figure \ref{result}, suggests that majority of the point clouds have been correctly placed in the reference coordinate system. The first row of Figure \ref{result} shows the outputs of feature based, ICP and TCM ICP registrations applied on Data set 1. Since  TCM ICP algorithm not susceptible to isolated points, it identifies the overlapped regions and works well with both sparse and dense data sets. Hence the registration of Data Set 1 is done with minimum error compared to other methods as quantified in row 3 of Table \ref{table1}.

The results given in the second row of Figure \ref{result} shows the Data set 2 registered using various point cloud registration methods. The column 1, row 2 of Figure \ref{result} is an overlap image of 3D scans obtained by registration using feature based method and TCM ICP. It is clear from this figure that some features such as the buildings towards the end were not properly registered by feature based registration method. The second figure in the second row of Figure \ref{result} shows the overlapped image of 3D scans obtained by registration using ICP based method and TCM ICP. The points in red towards the outer edges of this figure represents outliers present in ICP based registration result. These outliers have been successfully removed by TCM ICP. Similarly, Third and fourth rows of Figure \ref{result} shows outputs of registration of Data Set 3 and 4 using feature based, ICP, TCM ICP based registration methods, respectively. The registered 3D scan is that of a city street. All three registration methods produce outputs of comparable quality, but on a closer look, it can be seen that TCM ICP produced more accurate results as evident from Section \ref{qunt}.

\begin{table}[h]
  \begin{center}
      \caption{Performance of various registration algorithms on Data Set 1}
    \label{table1}
    \begin{tabular}{l|c|r|l}
      \textbf{Criteria} & \textbf{TCM ICP} & \textbf{ICP} & \textbf{Feature based}\\
      \hline
      \textbf{Time(min)} & 3716 & 3700 & \textbf{3510}\\
      \textbf{Iterations} & 2940 & 3015 & \textbf{2081}\\
      \textbf{RMS (50000 points)} & \textbf{1.022} & 1.88 & 1.34\\
      \textbf{Error due to isolated points} & \textbf{1.58} & 2.02 & 2.75\\
      \textbf{Error due to feature blurring} & \textbf{2.85} & 3.02 & 2.95\\
    \end{tabular}
  \end{center}
\end{table}

\subsection{Quantitative Analysis}\label{qunt}
\paragraph{Criteria for Analysis}The registration results using different methods are compared using various criteria including time, number of iterations, RMS (Root Mean Square) value of 50000 points, error after adding isolated points, error after removing points, error after feature blurring, cloud-to-cloud distance and standard deviation. The sensitivity of the algorithm is measured by feature blurring, i.e., the process of adding additional points to conceal the features in the point cloud.
\begin{table}[h]
  \begin{center}
    \caption{Performance of various registration algorithms on Data Set 2}
    \label{table2}
    \begin{tabular}{l|c|r|l}
      \textbf{Criteria} & \textbf{TCM ICP} & \textbf{ICP} & \textbf{Feature based}\\
      \hline
      \textbf{Time(min)} & 4208 & \textbf{3542} & 4050\\
      \textbf{Iterations} & 3528 & 4538 & \textbf{2938}\\
      \textbf{RMS (50000 points)} & \textbf{1.09} & 2.87 & 1.25\\
      \textbf{Error due to isolated points} & \textbf{1.82} & 2.9 & 3.85\\
      \textbf{Error due to feature blurring} & 3.74 & 4.53 & \textbf{3.22}\\
    \end{tabular}
  \end{center}
\end{table}

\paragraph{Comparison} A tabular comparison of time, number of iterations, RMS (50000 points), error after isolated point addition and error after feature blurring of Data Set 1 is shown in Table \ref{table1}. TCM ICP showed better results compared to ICP based and feature based registration when exposed to noise, occlusion of features and feature blurring. However TCM ICP relapsed in terms of time taken for completing the process of registration. ICP and feature based algorithms are implemented using the libraries like FLANN (Fast Library for Approximate Nearest Neighbours)\cite{b7}, which consist of most optimal heuristic implementations.
Table \ref{table2}-\ref{table3} reports various performance attributes of the compared algorithms on Dat Set 2 and 3. The results obtained are similar to that of Data Set 1, where TCM ICP outperforms the other two registration methods in terms of noise error, RMS and feature blurring, but trails behind in the case of computational time for the overall registration process. This is because of Simplex and Gradient descent method applied for optimization in Algorithm\label{alg2}.
\begin{table}[h]
  \begin{center}
     \caption{Performance of various registration algorithms on Data Set 3}
    \label{table3}
    \begin{tabular}{l|c|r|l}
      \textbf{Criteria} & \textbf{TCM ICP} & \textbf{ICP} & \textbf{Feature based}\\
      \hline
      \textbf{Time(min)} & 5316 & \textbf{4339} & 4875\\
      \textbf{Iterations} & 4250 & 4037 & \textbf{3405}\\
      \textbf{RMS (50000 points)} & \textbf{2.13} & 3.83 & 2.68\\
      \textbf{Error due to isolated points} & \textbf{2.53} & 3.99 & 3.98\\
      \textbf{Error due to feature blurring} & \textbf{3.85} & 4.30 & 3.96\\
    \end{tabular}
  \end{center}
\end{table}

\begin{figure*}[!h]
\centering
  \includegraphics[width=15cm]{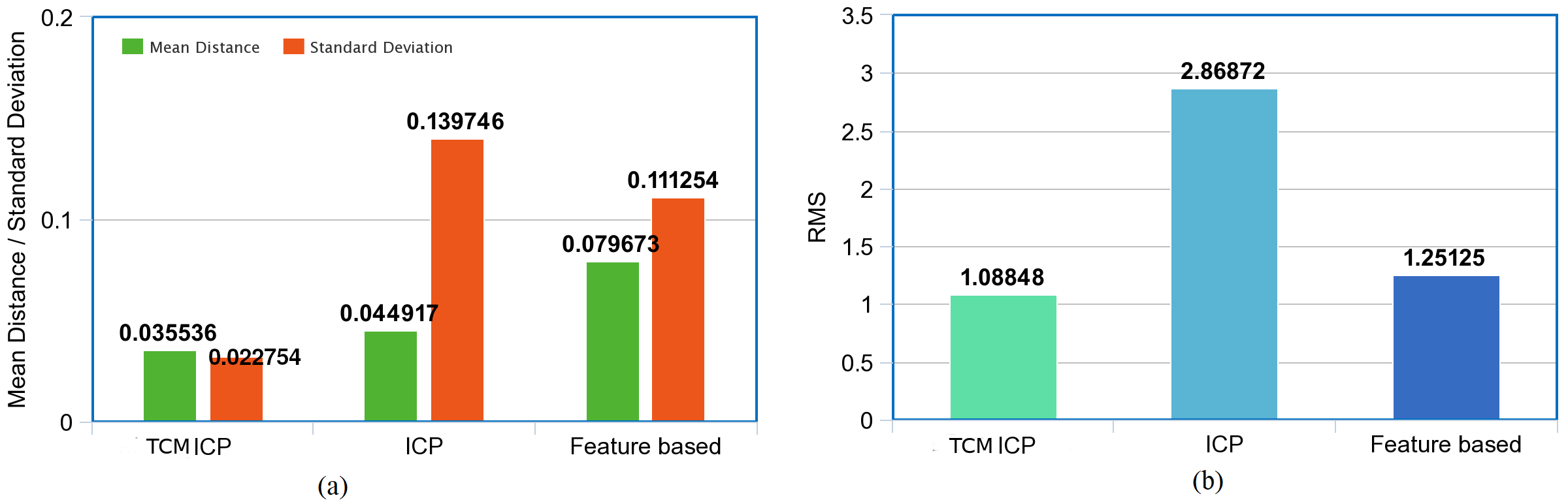}
  \caption{Bar chart showing the performance comparison of various registration methods with respect to (a) mean cloud-to-cloud distances and standard deviations and (b) RMS values of Data Set 2.}
 \label{bar2}
\end{figure*}

\begin{figure*}[!h]
\centering
  \includegraphics[width=15cm]{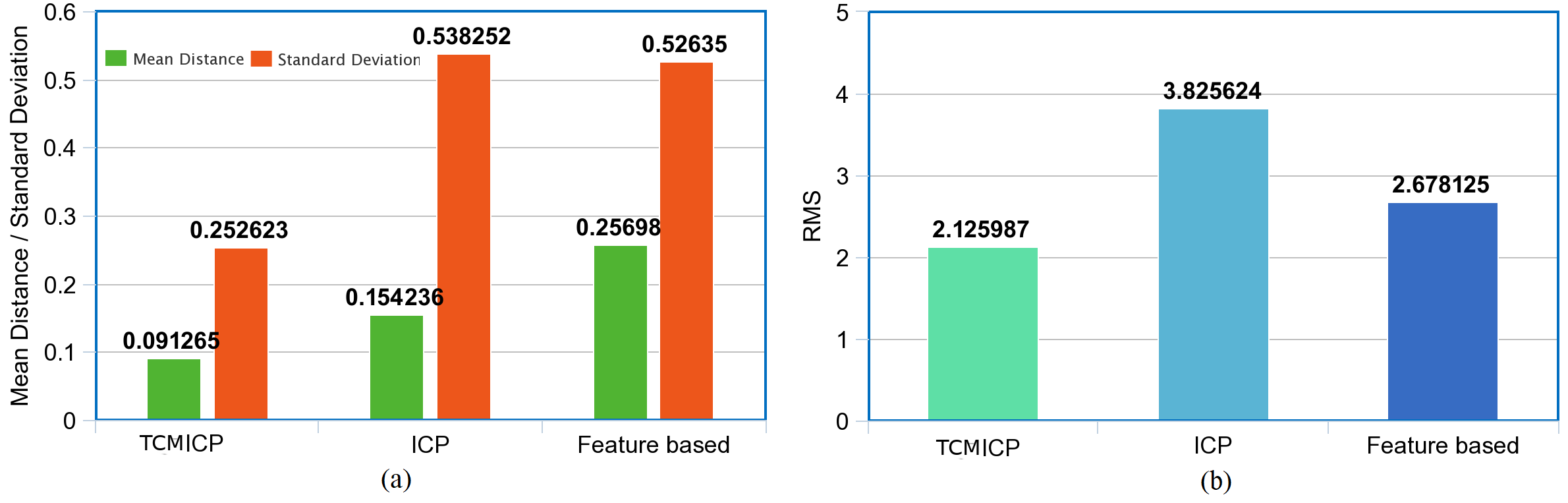}
  \caption{Bar chart showing the performance comparison of various registration methods with respect to (a) mean cloud-to-cloud distances and standard deviations and (b) RMS values of Data Set 3.}
 \label{bar3}
\end{figure*}

\paragraph{Ground truth based Evaluation} Bar charts presented in Figures \ref{bar2} and \ref{bar3} show that TCM ICP performs better as compared to ICP based and feature based registrations in terms of mean distance and standard deviation of the registered scans from the corresponding ground truths of Data Set 2 and 3. The RMS value of TCM ICP is also lower than that of ICP based and feature based registration for both Data Set 2 and 3 as shown in Figures \ref{bar2} and \ref{bar3}. Though the RMS values of TCM ICP and feature based registration are comparable, TCM ICP slightly scores over the feature based registration (refer to Figures \ref{bar2}(b) \& \ref{bar3}(b)). It is evident that the high quality registration generated by TCM ICP is mainly due to the prepossessing for outlier removal and a well designed point cloud selection method.
\begin{figure*}[h]
\centering
  \includegraphics[width=17cm]{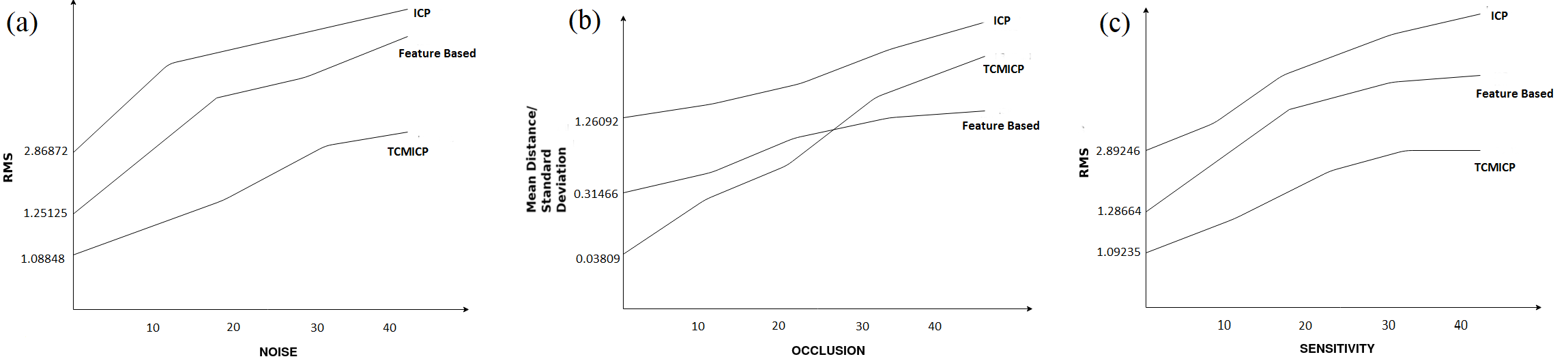}
  \caption{Graph showing the effect of (a) noise, (b) occlusion and (c) sensitivity of various registration algorithms on Data Set 2.}
  \label{gra2}
\end{figure*}

%
%

\begin{figure*}[h]
\centering
  \includegraphics[width=16cm]{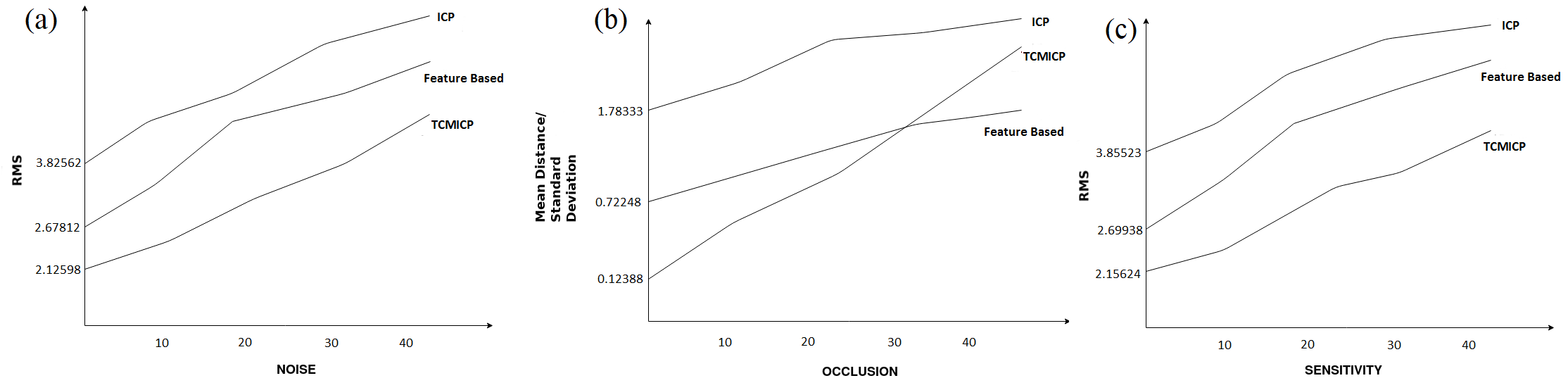}
  \caption{Graph showing the effect of (a) noise, (b) occlusion and (c) sensitivity of various registration algorithms on Data Set 3.}
  \label{gra3}
\end{figure*}

%
%
\paragraph{Robustness to Defect Laden Scans}To evaluate the performance of TCM ICP for defect laden LiDAR scans, we performed another experiment by manually introducing the artifacts such as noise, occlusions and sparsity. Graphs plotted in Figures \ref{gra2} and \ref{gra3} represent the RMS values or sum of mean distance and standard deviation between clouds against percentage of noise added, percentage of occlusion of features and percentage of points removed(sensitivity). Addition of noise, occlusion of features and removal of points cause a monotonic increase in RMS value for the test data, i.e., Data set 2 and 3. It is clear that RMS values of TCM ICP are always lower than that of ICP based and feature based registrations for noise addition and point removal experiments. However, in the case of sum of mean distance and standard deviation versus percentage of occlusion of features, the curve representing feature based registration overtakes TCM ICP after a relatively high percentage of features are removed.  This pattern is observed in both, Dataset 2 and 3. ICP based registration fails to produce the expected result because ICP suffers from local minima issues and it is always expect the clouds to have sufficient initial alignment before registration. This is the reason why a coarse initial alignment is provided before applying ICP\cite{b12}.

\section{Conclusion}
We proposed an algorithm for registering multiple and partially overlapping LIDAR scans of an outdoor scene. The key enabler of the proposed multi-view registration is a geometrical and statistical measure called transformation compatibility measure which effectively identifies the point cloud most similar to the reference scan in each iteration of the algorithm. The method overcome the problems of existing LIDAR registration methods such as the need for auxiliary data or target, requirement of sufficient overlapping areas, and difficulty in feature extraction and matching. The suggested method for point cloud registration shows promising results when tested against traditionally used methods using LiDAR scans of various outdoor scenes. The method also works effectively in removing outliers from the input point clouds, which in turn enhances the accuracy of the registered 3D scans. 

In the future, further step is to be taken to integrate data sets from heterogeneous sources, e.g., airborne, mobile and terrestrial. The execution time of the method is a factor that needs to be looked upon further. Using custom point cloud data structures and libraries for processing the point clouds would cut down the run time by a large factor. Data driven Techniques such as geometric deep learning\cite{b13} and various fine tuning methods may be used to improve the performance of the registration process.

\section*{Acknowledgment}
We acknowledge the developers of PCL and Dr. Gilles Debunne, developer of QGLViewer who provided invaluable insights in using PCL and CGAL libraries.\par

\ifCLASSOPTIONcaptionsoff
  \newpage
\fi




\begin{thebibliography}{1}

\bibitem{b1} Wei~Xin, Jiexin~Pu, \emph{An Improved ICP Algorithm for Point Cloud Registration},  International Conference on Computational and Information Sciences, 2010.
\bibitem{b2} P.~Dong, Q.~Chen. \emph{LiDAR Remote Sensing and Applications}. Boca Raton, Florida, United States: CRC Press, 2018.
\bibitem{b3} L.~Cheng, S.~Chen, X.~Liu, Hao~Xu, Yang~Wu , Manchun~Li and Y.~Chen, \emph{Registration of Laser Scanning Point Clouds: A Review}, in Sensor, vol. 18(5), 1641; doi:10.3390/s18051641, 2018.
\bibitem{b4} J.J.~Jaw and T.Y.~Chuang, \emph{Feature-Based Registration of terrestrial LIDAR point clouds}, International Archives of Photogrammetry, Remote Sensing and Spatial Information Sciences, 2010.
\bibitem{b5} H.C.~Mandhare, S.R.~Idate, \emph{A comparative study of cluster based outlier detection, distance based outlier detection and density based outlier detection techniques}, International Conference on Intelligent Computing and Control Systems, 2017.
\bibitem{b6} J.J.~Manoharan, S.H.~Ganesh, J.G.R.~Sathiaseelan , \emph{Outlier detection using enhanced K-means Clustering Algorithm and weight based senter approach}, International Journal of Computer Science and Mobile Computing, 2016.
\bibitem{b7}
Point Cloud Library Documentation, \url{http://http://docs.pointclouds.orgs}. Last accessed 28 Jan 2019
\bibitem{b8} A.~Myronenko , X.~Song, \emph{Point Set Registration: Coherent Point Drift}, IEEE Transactions on Software Engineering, 2010.
\bibitem{b9} W.~Forstner, K.~Khoshelham, \emph{Efficient and Accurate Registration of Point Clouds with Plane to Plane Correspondences},  IEEE International Conference on Computer Vision Workshops (ICCVW), 2017.
\bibitem{b10} R.~A.~Brown, \emph{Building a Balanced k-d Tree in O(kn log n) Time}, Journal of Computer Graphics Techniques, 2015.
\bibitem{b11} F.~S.~Hillier, G. ~J.~Lieberman, \emph{Introduction to operations research}, McGraw Hill, New York, 2001.
\bibitem{b12} Dirk~Holz, Alexandru~E.~Ichim, Federico~Tombari, Radu~B.~Rusu, and Sven~Behnke, \emph{Registration With the Point cloud Library PCL}, IEEE Robotics \& Automation Magazine, Volume 22, Issue 4, pp. 110-124, December 2015.
\bibitem{b13} Gil~Elbaz, Tamar~Avraham and Anath~Fischer, \emph{3D Point Cloud Registration for Localization using a Deep Neural Network Auto-Encoder},  2017 IEEE Conference on Computer Vision and Pattern Recognition (CVPR).
\bibitem{b14} Jiaolong~Yang, Hongdong~Li2 and Yunde~Jia, \emph{Go-ICP: Solving 3D Registration Efficiently and Globally Optimally}, 2013 IEEE International Conference on Computer Vision.
\bibitem{b15} Jiaolong~Yang, Hongdong~Li, Dylan~Campbell and Yunde~Jia, \emph{Go-ICP: A Globally Optimal Solution to 3D ICP Point-Set Registration},  IEEE Transactions on Pattern Analysis and Machine Intelligence ( Volume: 38 , Issue: 11 , Nov. 1 2016 ).
\bibitem{b16} Timothee Jost and Heinz Hugli,\emph{ Fast ICP Alogorithms for the Registration of 3D Data},Pattern Recognition-24th DAGM Symposium Zurich, Switzerland,pp 91-99,September 16–18, 2002 Proceedings.
\bibitem{b17} Sofien Bouaziz, Andrea Tagliasacchi and Mark Pauly,\emph{Sparse Iterative Closest Point},Volume 32 (2013), Number 5,Eurographics Symposium on Geometry Processing 2013.
\bibitem{b18} Yasuhiro Aoki, Hunter Goforth, Rangaprasad Arun Srivatsan and Simon Lucey,\emph{PointNetLK: Robust and Efficient Point Cloud Registration using PointNet},Open Access Version(2019),Computer Vision Foundation.    
\bibitem{b19} Point Cloud Library, http://pointclouds.org/ [last accessed: 01-04-2020]
\bibitem{b20} Computational Geometry Algorithms Library, https://www.cgal.org/ [last accessed: 12-15-2019]
\bibitem{b21} Cloud Compare, https://www.danielgm.net/cc/, [last accessed: 01-04-2019]
\end{thebibliography}
%

\end{document}